\documentclass{article}

     \PassOptionsToPackage{numbers, compress}{natbib}

\usepackage[final]{neurips_2024}




\usepackage[utf8]{inputenc} 
\usepackage[T1]{fontenc}    
\usepackage{hyperref}       
\usepackage{url}            
\usepackage{booktabs}       
\usepackage{amsfonts}       
\usepackage{nicefrac}       
\usepackage{microtype}      
\usepackage{xcolor}         
\usepackage{amsmath}
\usepackage{amsthm}
\newtheorem{theorem}{Theorem}
\usepackage{adjustbox}
\usepackage{epigraph}
\setlength\epigraphwidth{.7\textwidth}
\usepackage{tcolorbox}
\usepackage[textsize=scriptsize,textwidth=2cm]{todonotes}

\usepackage{listings,xcolor}
\usepackage{tcolorbox}

\usepackage{enumitem}
\usepackage{wrapfig}
\usepackage{twemojis}
\usepackage{tikz,siunitx}
\usetikzlibrary{shapes.geometric,shapes.symbols}

\definecolor{customred}{rgb}{0.545, 0, 0}

\newcommand{\tikzsymbol}[2][circle]{%
  \tikz[baseline=-0.5ex] \node[inner sep=2pt, shape=#1, fill=customred]{};%
}

\title{Models Can and Should Embrace the Communicative Nature of Human-Generated Math}

\author{%
   Sasha Boguraev\(^\dag\),\quad Ben Lipkin\(^\ddag\),\quad Leonie Weissweiler\(^\dag\),\quad Kyle Mahowald\(^\dag\)\\ 
   \(^\dag\)Department of Linguistics, The University of Texas at Austin \\
   \(^\ddag\)Department of Brain and Cognitive Sciences, Massachusetts Institute of Technology \\
\texttt{\{sasha.boguraev, weissweiler, kyle\}@utexas.edu}\\
\texttt{lipkinb@mit.edu}\\
}

\begin{document}

\maketitle

\begin{abstract}
Math is constructed by people for people:
just as natural language corpora reflect not just propositions but the communicative goals of language users, the math data that models are trained on reflects not just idealized mathematical entities but rich communicative intentions.
While there are important advantages to treating math in a purely symbolic manner, we here hypothesize that there are complementary benefits to treating math as situated linguistic communication and that language models are well suited for this goal, in ways that are not fully appreciated.
We illustrate these points with two case studies. First, we ran an experiment in which we found that language models interpret the equals sign in a humanlike way---generating systematically different word problems for the same underlying equation arranged in different ways. 
Second, we found that language models prefer proofs to be ordered in naturalistic ways, even though other orders would be logically equivalent.
We advocate for AI systems that learn from and represent the communicative intentions latent in human-generated math.
\end{abstract}

\renewcommand{\epigraphflush}{center}
\setlength{\epigraphwidth}{0.9\textwidth}
\setlength{\beforeepigraphskip}{0mm}
\setlength{\afterepigraphskip}{0mm}
\epigraph{Mathematical propositions are first of all English sentences; not only English sentences, \\but each mathematical proposition has a resemblance to certain non-mathematical propositions. }{\textit{---Ludwig Wittgenstein, \textit{Lectures on the Foundations of Mathematics}, 1939}}

\section{Introduction}

Language Models sometimes rely on heuristics and statistics rather than being perfectly compositional idealized reasoners, especially in domains like math and logic \citep{mccoy2023embers,mirzadeh2024gsm, opedal2024mathgap,opedal2024language,patel-etal-2021-nlp,razeghi-etal-2022, shi2023large,stolfo2023causal}.
Whereas language production and comprehension involve some idealized composition using abstract rules \citep{chomsky1957syntactic,heim1998semantics},
in tandem with memorization and pragmatic inference \citep{clark1996using,goldberg2019assessing}, math and logic reflect domains where one might expect an idealized compositional system to be required for obtaining precise solutions.
Indeed, whether an expression is written \(5 + x = 7\) or \(7 - 5 = x\) or ``What is 5 less than 7?'' or ``Seven frogs were sitting on a log. Five left. How many are there now?'', there is an underlying computation that can be extracted and performed (namely, the expression \(7 - 5\)).
To properly solve these problems, the thinking goes, systems should abstract away from their situated format into symbolic space.

There is an intuitive, and well-justified, idea that competent human mathematical reasoners employ exactly this kind of abstraction. 
By contrast, less competent mathematical reasoners (e.g., children struggling to learn math) are often shown to rely on heuristics, schemas, and keywords \citep{briars1984a,clement2005a,karp2019avoiding,powell2018effective,verschaffel2000making}.
For instance, kids might learn that every time they see the phrase ``in total'' in a word problem, they should add up all the numbers \citep{powell2022investigation}.
While the ``heuristic'' keyword-based direct translation approach may be less cognitively taxing, it is also prone to translation errors \citep{verschaffel2020word}.
Students who report adopting the more involved strategy of first parsing a math word problem into a structured mental model, then planning computation and finally evaluating the solution in that space, are more successful problem solvers \citep{hegarty1995comprehension}.

Taken together, these ideas might make it seem like the goal of AI math models should be to leave the messy domain of language behind and translate expressions into symbolic representations. 
And, indeed, combining language models with symbolic solvers has proven successful in a variety of math and reasoning domains \citep{borazjanizadeh2024reliable,gao2023pal,he2023solving,olausson2023linc,sprague2024cotcotchainofthoughthelps,ye2024satlm}.

\begin{figure}
    \centering
\includegraphics[width=1\linewidth]{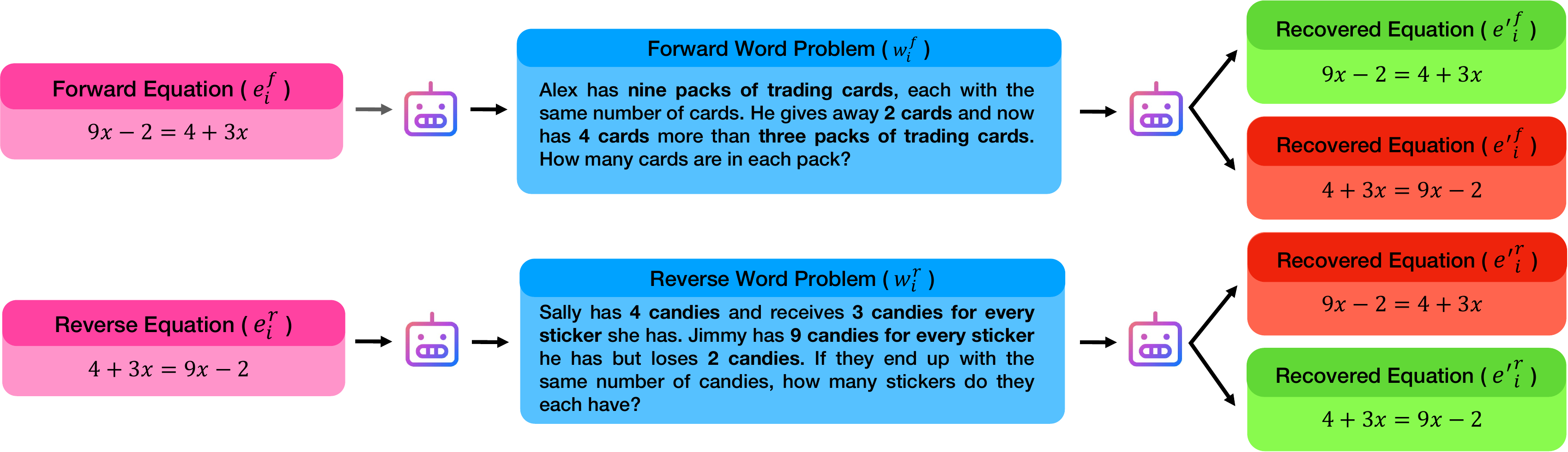}
    \vspace{-15pt}
    \caption{For each pair of equations, we generate corresponding word problems and then try to recover the equations from those problems. The model often recovered the original ordering.}\label{fig:method1}
    \vspace{-7.5pt}
\end{figure}

Here, we argue that something is lost when disregarding 
the original context.
We introduce the \textbf{Communicative Math Hypothesis}:
\textit{Math is constructed by people, for people. As such, there are conventions and pragmatics that people bring to the production and comprehension of mathematical expressions---communicative interpretations that go beyond the purely symbolic.} Such traces of information are particularly well suited for study via the tools of linguistics and cognitive science.
The choice to write \(3x + 9\) instead of \(3(x + 3)\) conveys something to the reader, even though they are equivalent.
{Similarly, the proof of a theorem is not only a formalization that could be computationally verified, but is a communicative act, with intention of being internalized and understood by others.}

Drawing on research in math education that we believe is underappreciated in machine learning, we make the case for AI researchers to take the Communicative Math Hypothesis seriously.
We present some initial proof-of-concept experiments showing that LLMs pick up on these communicative regularities. We argue that this information should not always be ignored or explained away, but is a crucial component of human mathematics.

\section{Case Study One: Equations are Asymmetric}

Asymmetry in human mathematical interpretation has long been studied in math education. 
In particular, there is a wealth of literature on the perils of grade-school-aged children's asymmetrical understanding of math -- that is, a difficulty in reasoning with a problem such as \(\Box = 2 + 4\), despite relative comfort with the complementary equation of \(2 + 4 = \Box\) \citep{behr1980children, powell2012equations}.
But, such sensitivity to asymmetry is not isolated to students. Even expert mathematicians understand math asymmetrically \citep{mirin2022mathematicians}, offering different interpretations of equivalent expressions based on what is on the left or right of the equals sign. Here, we present results from a case study demonstrating that LLMs are sensitive to such asymmetries in equations as well and, like humans, do not learn a purely symmetrical interpretation of the equals sign.

\paragraph{Methods} To test LLMs' sensitivity to symmetry, we conduct an experiment assessing their ability to reconstruct the equations they used to create a specific word problem, as shown in Figure \ref{fig:method1}. 
Formally, we perform a three-step experiment. We first generate a set of \( n \) paired forward and reverse equations, denoted as \( E = \{ e_1, e_2, \ldots, e_n \} \), where each paired equation \( e_i \) consists of the forward equation \( {e_i}^f \) and the reverse equation \( {e_i}^r \). Thus, we can express each \( e_i \) as \( e_i = \{ {e_i}^f, {e_i}^r \} \). Next, for each of our \( n \) pairs, we pass both equations in \(e_i\) to GPT-4o, and prompt it to generate a corresponding pair of word problems, \(w_i = \{w_i{}^f\), \(w_i{}^r\}\), that could be solved by \(e_i\), with \(W = \{w_1 \dots w_n\}\). We finally ask the LLM to extract the equations \(e'_i = \{e'_i{}^f\), \(e'_i{}^r\}\) for each \(w_i\in W\), with \(E' = \{e'_1\dots e'_n\}\). Our hypothesis is that across all \(n\) equations, the LLMs will more often recover \(e'_i{}^f\) from \(e_i{}^f\) and \(e'_i{}^r\) from \(e_i{}^r\). 
For details on the equations used, their generation, and model prompting, see Appendix \ref{app:StudyOne}.

\paragraph{Results and Discussion} We measure the recovery rate of an equation's original order and the respective reverse order using GPT-4o across 5 different sets of 200 pairs of randomly generated starting equations. 
We found that the original equation was recovered on average 52\% of the time with a 95\% CI of [51\%, 54\%] across 5 runs. The reverse equation was nearly never recovered: 0.2\% of the time, with a 95\% CI of [0.0\%, 0.4\%] -- a mere 3 times over the 1000 samples.

These results suggest a difference between the word problems generated from a ``forward'' equation and word problems generated from a logically equivalent ``reverse'' equation, and that this difference is itself recoverable by GPT-4o.
We posit that this information, which a purely symbolic solver would be agnostic to, is crucial information for systems that aim to use math in collaboration with humans or in human-like ways.
These findings are consistent with work showing that premise order matters in LLMs' ability to reason \citep{berglund2024reversalcursellmstrained, chen2024premise, wu2024exploring}, although they frame this order sensitivity as primarily revealing LLMs' brittleness. 
We interpret these findings (and theirs) as revealing sensitivity to important communicative factors inherent in the data.

\section{Case Study Two: Mathematical Rules and Proofs Have Orders}

Our second case study focuses on mathematical communication of the sort more likely to take place among professional mathematicians: mathematical rules and proofs.
Proofs, in particular, are widely used in academic math, as well as related fields, and are duly an area of major focus for AI for math.

Proofs are written to communicate truths that are, in some sense, tautological.
Nonetheless, mathematicians have strong expectations and interpretations about the directionality of equation. 
For instance, there are generalized principles associated with equal signs, like that the right side of the equation expounds upon or explains the left side \citep{mirin2022mathematicians}.
Thus, while $a = b$ and $b = a$ are equivalent statements by our agreed-upon set of axioms and inference rules, the choice of one or the other might communicate a different message when used in a proof.

To explore the preferred orderings used by mathematicians in proofs and rules, \citet{mirin2022mathematicians} utilize a set of \textit{breaching experiments}. Breaching experiments are a class of experiments which try to break rules in an attempt to confirm their existence \citep{rafalovich2006making}. In particular, the authors first provided expert mathematicians with a host of formal mathematical equations, such as the distributive rule or an inductive proof. However, these equations were ordered in an unnatural manner -- that is, in the case of rules, orders which are not commonly encountered in formal mathematical texts, or in the case of proofs, orders in which steps do not sequentially build from one to the next. The authors measured whether these mathematicians reported any perceived breaches, with any such breaches providing evidence for the existence of the mathematicians' ordering preferences. Our case study into LLM ordering preferences in formal mathematics follows in this vein, measuring LLM surprisals for various natural (extant) and unnatural (unobserved) equation orderings.

\paragraph{Methods} Our set of mathematical equations consists of all examples used in the breaching experiments of \citet{mirin2022mathematicians}. This totals ten different examples, six of which are one line equivalences, expressing common mathematical rules, and the other four of which are a series of equivalences comprising a longer proofs. Each example further contains a brief textual introduction before the series of equivalences. All examples are reported in Appendix \ref{app:equations}.

 We first split each equation into its individual expressions. We then generate every possible ordering of a given equation by permuting the order of these individual expressions. Finally, for each model we calculate the average per-token surprisal for every ordering of expressions in a given equation, conditioned on that equation's textual introductions. Our calculations are performed using the \texttt{minicons} package \citep{misra2022minicons}, a wrapper around Huggingface's \texttt{transformers} package \citep{wolf-etal-2020-transformers}.

In this case study, we use the instruction-tuned variants of four models: LLaMa 3.1 8B \citep{dubey2024llama3herdmodels}, Mistral 7B v0.3 \citep{jiang2023mistral7b}, Mathstral 7B \citep{mathstral}, and Qwen2-Math 7B\citep{yang2024qwen2}. Two of these models were trained on general corpora (LLaMa and Mistral), the other two fine-tuned on math (Mathstral and Qwen2-Math).

\begin{figure}
    \centering
    \includegraphics[width=\linewidth]{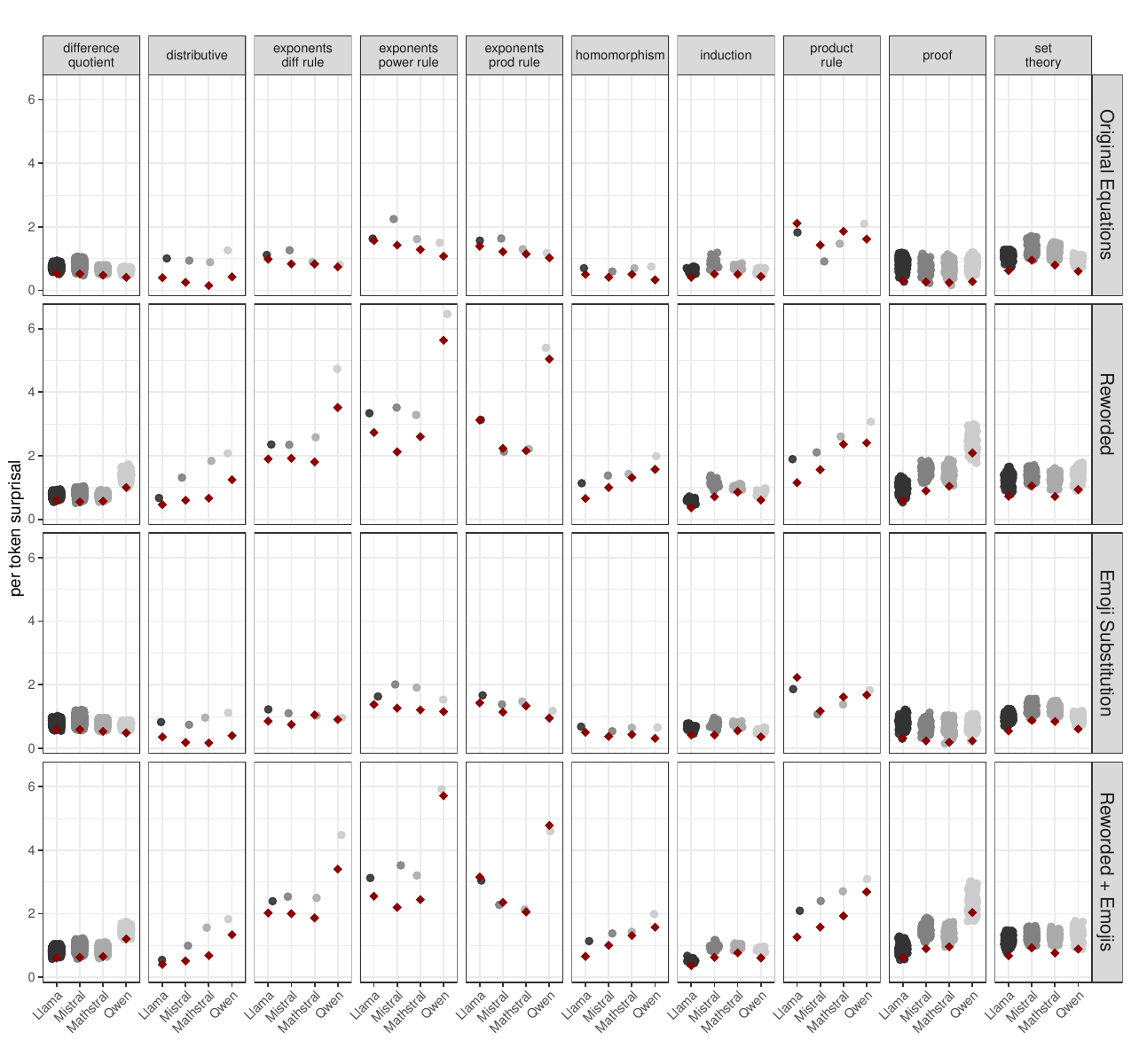}
    \vspace{-25pt}
    \caption{We compare average per-token surprisal for different, logically equivalent orderings of expressions in proofs from \citet{mirin2022mathematicians} (first row), and corresponding variants (second through fourth row). We find that the original order (\tikzsymbol[diamond])) has lower per-token surprisals on average (more probable) than equivalent counterfactual orders.}
    \label{fig:proof_order}
    \vspace{-7.5pt}
\end{figure}

\paragraph{Equation Variants}
To control for our ten equations potentially being within the evaluated model's training data, we also performed evaluations with three sets of modified, but equivalent, variants of each equation. Our first variants consist of all proofs reworded in a logically equivalent, but expressively different, manner. Our second variants systematically replace all equation variable names, and some rule names, with emojis -- maintaining the correctness of equations, but presenting them in a unique, unseen, manner. Our last set of variants combines the two previous variants, substituting emojis into the reworded variants. All variants are reported in Appendix \ref{app:variants}.

\paragraph{Results and Discussion} As seen in Figure \ref{fig:proof_order}, the evaluated models display clear and consistent preferences for the natural ordering in nine of ten equations. In seven of these,
all models display uniform preference for each equation's natural ordering. Of the remaining two equations (\textsc{Difference Quotient} and \textsc{Proof}), only a few nearby orders had a lower surprisal than the natural orders (99.6\textsuperscript{th} and 98.3\textsuperscript{rd} percentiles, respectively). The only equation for which there is no clear model preference for the natural form is \textsc{product rule}, but this was also a rule noted as unusual by participants in \citet{mirin2022mathematicians}: mathematicians expressed surprise at seeing $f$ and $g$ instead of $f(x)$ and $g(x)$. When we instead use the latter notation, we see consistent preferences for the natural order.
We do not find significant differences between the performances of math-fine-tuned models and more generalized language models across all equations (paired $t = 0.606$, $p = 0.548$). 

These results are further consistent across our equation variants (Figure \ref{fig:proof_order}). 
While there is minor variability here (e.g., after rewording proofs, models no longer display clear preferences in \textsc{Exponent Prod Rule} but do display clear preferences for natural orderings in \textsc{product rule}) the evaluated LLMs maintain clear and consistent preferences for natural equation orderings even when modified.

 These results suggest that LLMs agree with expert mathematicians in their preferences for ordering of proofs and rules, that is in a manner which expresses clear communicative intent. Further, our work with equation variants suggest that they are aligned due to more than just memorizing training data. This alignment leads to AI systems able to produce math interpretable by those using them, which in comparison to much of the uninterpretable math produced by symbolic solvers and logic programming systems, is a highly desirable quality. As such, while the proofs LLMs produce in their current iteration may not always be correct, any remedies attempting to improve on that correctness should not do so to the detriment of this alignment, if the goal is human use.

\section{Practical Applications}

We focused our experiments on equation asymmetry and proof ordering, showing that LLMs learn extra-symbolic communicative information in both domains. But these principles encompass a much broader class of phenomena. For instance, several patterns identified as reflecting LLMs' brittleness may instead be fruitfully seen as contributing to the communicative interpretation of math.
\begin{itemize}[topsep=0pt, leftmargin=3mm]
\addtolength\itemsep{-1.2mm}
    \item Even though they don't matter logically, variable names matter for communicating math (e.g., functions are often $f$ and $g$). This pattern extends to programming as well \citep{hersh1998mathematics,miceli2023larger}.
    \item Logically extraneous or pragmatically anomalous information can matter for inferences about how expressions are interpreted \citep{pasolunghi1999working,shi2023large}.
    \item Notation choice and instruction/prompt phrasing can matter for how problems are solved \citep{guccler2014role,iverson1979notation}.
\end{itemize}

Seeing these aspects of LLMs as possible features, and not bugs, could be an important step in developing AI systems that can work with humans.
For instance, working mathematicians were long limited to purely symbolic theorem provers.
Such systems in isolation neglect the more human aspects of math, ignoring differences in style and comprehensibility.
We recommend developing proof assistants that are sensitive to these regularities in human proof-writing and other communicative cues.
LLM-based proof systems offer the promise of mathematical assistants that can work \textit{with} people \citep{collins2024building}, alongside them and not just \textit{for} them as blackbox tools. Below we discuss this idea in two particularly relevant domains.

\paragraph{Math Education}
Math educators leverage their explicit and implicit understanding of mathematical communicative signaling to enhance teaching. They carefully choose problems presented in manners that probe the intended concepts \citep{liz2006exemplification, rowland2008purpose}. They can identify subtle misunderstandings in their students' reasoning just by observing how they discuss mathematical concepts and use them in practice \citep{bray2011collective, kingsdorf2014error, radatz1979error, riccomini2005identification}. These are key skills for educators, allowing for more efficient and effective teaching. As we move towards building AI assistants for math education, it is pertinent to develop systems that, like math educators, can both produce and identify these rich communicative signals.

\paragraph{Math Research}
The furthering of knowledge in any field depends upon the ability to communicate new ideas. If AI math systems are unable to communicate with those using them, we risk merely developing powerful systems that remain limited in their benefits. Instead, we advocate for building systems that produce math in a manner which is communicative and human-like by design, offering promise of furthering our collective mathematical knowledge base. While there is some benefit to systems that can solve problems and prove theorems that humans cannot, what we gain is limited if little information from their methods can be communicated and thusly understood. Of course, AI systems augmented with symbolic solvers do possess the necessary qualities of correctness and robustness which current, non-augmented, LLMs do not. We are not arguing that future AI math systems should lose these qualities, merely that they should also possess communicative sensitivity, something that many current approaches lack. Developing a new generation of hybrid systems, that work through problems via human-interpretable traces, while in parallel formalizing and verifying via symbolic means, appears to be a fruitful path forward.

\section{Conclusion}

While necessarily fuzzier than purely symbolic representations, the communicative principles in human-generated math are not lawless or illogical but can be studied, systematized, and modeled as rational behavior---as they are in linguistics and cognitive science \citep{clark1996using,frank2012predicting,gibson2019efficiency}.
We join \citet{zhang2023ai} in their call for a cognitive science perspective on AI and mathematics, centering the role of math as a group activity and communicative endeavor.
The math of the people, by the people, for the people, shall not perish from our models. 

\section*{Acknowledgments}
We would like to thank Paul Dawkins for valuable discussions and insights on mathematical asymmetry and, more generally, the math education literature. We would further like to thank Qing Yao and the computational linguistics research group at UT
Austin for their valuable discussions and insights on this work. We would also like to thank Kanishka Misra for assistance with the \texttt{minicons} package, and comments on the manuscript.
We acknowledge funding from NSF CAREER grant 2339729 (to Kyle Mahowald).
\bibliographystyle{apalike}  
\bibliography{custom,anthology}
\appendix

\newpage

\section{Equation Generation and Prompting in Case Study One}\label{app:StudyOne}
\subsection{Equation Generation}

For step one of this experiment, we create our equation sets as follows. We first create two independent expressions, each of which consists of two operands, either added or subtracted to each other. One of these operands is a single digit number, with the other being a variable quantity in \(x\) with a single digit coefficient. All operands, operations, and choice of which operands are the variable quantity are selected at random. We then form our pair of complimentary equations by placing an equals sign between these two expressions, in both orders. That is, given the two expressions \(a\) and \(b\), our pair of complimentary equations would be \(a=b\) and \(b=a\). To illustrate further, a generated set of expressions may include, for example, \(2x + 3 = 4 - 5x\) or \(8 - 5x = 2 + 3x\), but not \(2x = 3\), \(4x + 5y = 8x - 2\), or \(9x * 2 = x - 4\).

\subsection{Prompting Methods}
Our experimental methodology necessitates prompting GPT-4o twice for each equation in our evaluation set: once to create a word problem from a given equation, and once to try and recover an equation given a math word problem. Below, we describe the prompts used for each of these steps.

\subsubsection{Prompting for Word Problem Creation}

For a given equation, \textsc{equation}, we first prime GPT-4o with the following command:

\begin{quote}
    "You are a helpful middle school math teacher."
\end{quote}

We then prompt the model to generate a word problem using the following prompt:

\begin{quote}
    "Create a grade-school math problem representing the following equation: \{\textsc{equation}\}. Make sure your problem is clear, concise, represents every term of the equation, and ends in a question mark. Generate just the problem and nothing else."
\end{quote}

\subsubsection{Prompting for Equation Recovery}

For a given math word problem, \textsc{problem}, we first prime GPT-4o with the following command:

\begin{quote}
    You are a helpful assistant.
\end{quote}

We then prompt the model to recover the equation that is represented by \textsc{problem} with the following prompt:

\begin{quote}
    "What is the underlying math equation represented by the following situation: \{\textsc{problem}\}. Use the letter 'x' for the unknown quantity. Please do not explain, or write any accompanying text, give just a single equation and nothing else."
\end{quote}

\section{Equation Set for Case Study Two}\label{app:equations}

We use the following set of equations from \citet{mirin2022mathematicians} for evaluating model order preferences in formal mathematics. We name each following subsection as they are labeled in Figure \ref{fig:proof_order}. Each equation is presented below in its "natural" form. All \TeX formatting used to render the following sections, up-to the end of the equations, is included in our experiment.

\subsection{\textsc{Difference Quotient}}
The \textbf{difference quotient} of a function $g$ is defined to be 

\begin{equation*}
    \frac{g(x + h) - g(x)}{(x + h) - x}
\end{equation*}

where $h$ is nonzero. Let $f$: $\mathbb{R}\rightarrow\mathbb{R}$ be the function defined by $f(x) = x^2$. The following shows the difference quotient:

\begin{align*}
    \frac{f(x + h) - f(x)}{(x + h)-x} &= \frac{f(x + h) - f(x)}{h}\\
    &= \frac{(x + h)^2 - x^2}{h}\\
    &= \frac{x^2 + 2xh + h^2 - x^2}{h}\\
    &= \frac{2xh}{h^2} \\
    &= 2x + h
\end{align*}

\subsection{\textsc{Distributive}}
The distributive law tells us that for all numbers $x$, $y$, and $z$,\begin{equation*}x(y+z) = xy + xz
\end{equation*}
\subsection{\textsc{Exponents Diff Rule}}
Recall the Properties of Exponents:\begin{equation*}
\frac{b^x}{b^y} = b^{x-y}
\end{equation*}
\subsection{\textsc{Exponents Power Rule}}
Recall the Properties of Exponents:\begin{equation*}
(b^x)^y = b^{xy}
\end{equation*}
\subsection{\textsc{Exponents Prod Rule}}
Recall the Properties of Exponents:\begin{equation*}
b^x * b^y = b^{x+y}
\end{equation*}
\subsection{\textsc{Homomorphism}}
Let $\langle S,\star\rangle$ and $\langle S',\star'\rangle$ be binary algebraic structures. A \textbf{homomorphism from $\langle S,\star\rangle$ to $\langle S',\star'\rangle$} is a function $\phi$ : $S\rightarrow S'$ such that for all $x$, $y\in S$,\begin{equation*}
\phi(x\star y) = \phi(x) \star'\phi(y)
\end{equation*}
\subsection{\textsc{Induction}}
The following is a portion of a proof by induction that for all natural numbers $k$, $k^3 - k$ is divisible by 6. At this point in the proof, it has been assumed that $n^3 - n$ is divisible by 6, and it is being shown that $(n+1)^3 - (n+1)$ is therefore also divisible by 6.\begin{align*}(n+1)^3 - (n + 1) &= (n^3 +3n^2 + 3n + 1) - (n + 1)\\&=(n^3 +3n^2 + 3n + 1) - (n + 1)\\&=(n^3 - n) + (3n^2 + 3n)\\&=(n^3 - n) + 3n(n+1)
\end{align*}
\subsection{\textsc{Product Rule}}
The \textit{product rule} for derivatives says that if $f$ and $g$ are differentiable functions, then
\begin{align*}
fg' + f'g = (fg)'
\end{align*}
\subsection{\textsc{Proof}}
\begin{theorem}Suppose $\langle S,\star\rangle$ and $\langle S',\star'\rangle$ be binary algebraic structures, and $\phi$ is an isomorphism from $\langle S,\star\rangle$ onto $\langle S',\star'\rangle$. Further suppose that $e$ is a left identity element in $\langle S,\star\rangle$. Then $\phi(e)$ is a left identity element in $\langle S',\star'\rangle$.\end{theorem}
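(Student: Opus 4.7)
The plan is to unfold definitions and chase an arbitrary element of $S'$ back to $S$ via the isomorphism, then exploit that $e$ is a left identity in $S$. Concretely, to show that $\phi(e)$ is a left identity in $\langle S',\star'\rangle$, I need to verify that $\phi(e)\star' s' = s'$ holds for every $s' \in S'$. So I would begin the proof by fixing an arbitrary $s' \in S'$.

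The key move is to use surjectivity of $\phi$ (which is built into the assumption that $\phi$ is an isomorphism \emph{onto} $\langle S',\star'\rangle$) to pull $s'$ back: pick some $s \in S$ with $\phi(s) = s'$. Now I can rewrite
\begin{equation*}
\phi(e)\star' s' \;=\; \phi(e)\star' \phi(s),
\end{equation*}
and then apply the homomorphism property $\phi(x)\star'\phi(y)=\phi(x\star y)$ from the \textsc{Homomorphism} definition above to obtain $\phi(e\star s)$. Using the left-identity property of $e$ in $\langle S,\star\rangle$, this collapses to $\phi(s)=s'$, closing the chain. Since $s'$ was arbitrary, $\phi(e)$ is a left identity in $\langle S',\star'\rangle$.

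The only moving parts are (i) having surjectivity available and (ii) having the homomorphism equation oriented so that $\phi(x)\star'\phi(y)$ can be rewritten as $\phi(x\star y)$. Neither is really an obstacle: (i) is explicit in the hypothesis that $\phi$ is an isomorphism \emph{onto} $S'$, and (ii) is exactly the definition given. Notably, injectivity of $\phi$ is not needed for this direction, and we do not need a right-identity hypothesis either.

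Stylistically, I would present the argument as a short forward computation, displaying the equalities in order so that each step has an obvious justification tag in the margin (homomorphism, left identity in $S$, choice of $s$). This ordering is faithful to the preferred proof-ordering discussed in Case Study Two: the left-hand side is the object under study ($\phi(e)\star' s'$) and each subsequent line rewrites it until it reaches the target $s'$, rather than starting from $s'$ and working backwards.
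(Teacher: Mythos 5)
Your proof is correct and takes essentially the same route as the paper's: fix an arbitrary $s'\in S'$, use surjectivity to choose $s\in S$ with $\phi(s)=s'$, and then chain the homomorphism property with the left-identity property of $e$ to conclude $\phi(e)\star's'=s'$. The only difference is presentational: the paper writes the chain as $s' = \phi(s) = \phi(e\star s) = \phi(e)\star'\phi(s) = \phi(e)\star's'$, starting from $s'$ and building up to the compound expression, whereas you start from $\phi(e)\star's'$ and reduce it to $s'$ --- which, somewhat ironically given your closing remark, is the reverse of the ordering the paper presents as the ``natural'' one.
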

        
\begin{proof}Let $s'$ be an element of $S'$. Since $\phi$ is onto, there exists some $s\in S$ such that $\phi(s) = s'$. Hence\begin{equation*}s' = \phi(s) = \phi(e\star s) = \phi(e)\star'\phi(s) = \phi(e)\star's'
\end{equation*}
\end{proof}
        
\subsection{\textsc{Set Theory}}
The following is a proof in a set theory textbook that if $a$ is a transitive set, then $\bigcup (a^+)=a$. Note that a transitive set is defined to be a set $a$ such that all members of $a$ are subsets of $a$, and $a^+$ is defined to be $a\cup \{a\}$\begin{proof}\begin{align*}(\bigcup a^+)&=\bigcup(a \cup \{a\})\\&=(\bigcup a)\cup(\bigcup \{a\})\\&=(\bigcup a)\cup a\\&=a
\end{align*}\end{proof}

\section{Equation Variants}\label{app:variants}

\subsection{Reworded Variants}
\subsubsection{\textsc{Difference Quotient}}
Let $f$: $\mathbb{R}\rightarrow\mathbb{R}$ be the function $f(x) = x^2$. The following shows the difference quotient:\begin{align*}\frac{f(x + h) - f(x)}{(x + h)-x} &= \frac{f(x + h) - f(x)}{h}\\&= \frac{(x + h)^2 - x^2}{h}\\&= \frac{x^2 + 2xh + h^2 - x^2}{h}\\&= \frac{2xh}{h^2}\\&= 2x + h
\end{align*}
\subsubsection{\textsc{Distributive}}
For all numbers $x$, $y$, and $z$, the distributive law states that \begin{equation*}x(y+z) = xy + xz
\end{equation*}
\subsubsection{\textsc{Exponents Diff Rule}}
Here are some exponent properties:\begin{equation*}\frac{b^x}{b^y} = b^{x-y}
\end{equation*}
\subsubsection{\textsc{Exponents Power Rule}}
Here are some exponent properties:\begin{equation*}(b^x)^y = b^{xy}
\end{equation*}
\subsubsection{\textsc{Exponents Prod Rule}}
Here are some exponent properties:\begin{equation*}b^x * b^y = b^{x+y}
\end{equation*}
\subsubsection{\textsc{Homomorphism}}
If $\langle S,\star\rangle$ and $\langle S',\star'\rangle$ are binary algebraic structures, a \textbf{homomorphism from $\langle S,\star\rangle$ to $\langle S',\star'\rangle$} is a function $\phi$ : $S\rightarrow S'$ such that $\forall$ $x$, $y\in S$,\begin{equation*}\phi(x\star y) = \phi(x) \star'\phi(y)
\end{equation*}
\subsubsection{\textsc{Induction}}
Inductively prove that $\forall k\in\mathbb{N}$, $k^3 - k$ is divisible by 6. We will show that $(n+1)^3 - (n+1)$ is divisible by 6, with the prior assumption that $n^3 - n$ is divisible by 6.\begin{align*}(n+1)^3 - (n + 1) &= (n^3 +3n^2 + 3n + 1) - (n + 1)\\&= (n^3 - n) + (3n^2 + 3n)\\&= (n^3 - n) + 3n(n+1)
\end{align*}
\subsubsection{\textsc{Product Rule}}
If $f$ and $g$ are differentiable functions, then the \textit{product rule} states that \begin{align*}(fg)' = fg' + f'g 
\end{align*}
\subsubsection{\textsc{Proof}}
\begin{theorem} $\phi$ is an isomorphism from $\langle S,\star\rangle$ onto $\langle S',\star'\rangle$ where $\langle S,\star\rangle$ and $\langle S',\star'\rangle$ are both binary algebraic structures. If $e$ is a left identity element in $\langle S,\star\rangle$, then $\phi(e)$ is a left identity element in $\langle S',\star'\rangle$.\end{theorem}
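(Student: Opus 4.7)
The plan is to show that $\phi(e) \star' s' = s'$ for every $s' \in S'$, which is exactly the defining property of a left identity element in $\langle S',\star'\rangle$. The key leverage point is surjectivity of $\phi$: it lets us represent any element of the target structure as $\phi(s)$ for some $s \in S$, where the left-identity property of $e$ under $\star$ applies directly, and the homomorphism condition then transfers the resulting equation from $\langle S,\star\rangle$ into $\langle S',\star'\rangle$.

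Concretely, I would fix an arbitrary $s' \in S'$ and invoke surjectivity of $\phi$ to obtain some $s \in S$ with $\phi(s) = s'$. Next I would rewrite $s' = \phi(s) = \phi(e \star s)$, using that $e$ is a left identity in $\langle S,\star\rangle$. Applying the homomorphism property gives $\phi(e \star s) = \phi(e) \star' \phi(s) = \phi(e) \star' s'$. Chaining these equalities yields $s' = \phi(e) \star' s'$, and since $s'$ was arbitrary, $\phi(e)$ is a left identity in $\langle S',\star'\rangle$.

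The argument is short and there is no genuinely hard step; the main risk is bookkeeping, since one must track carefully which operation ($\star$ versus $\star'$) lives in which structure and apply the homomorphism condition at exactly the moment that bridges them. I would also flag, as a remark, that only surjectivity and the homomorphism property are used---injectivity of $\phi$ never enters the argument---so the conclusion in fact holds for any surjective homomorphism, not just isomorphisms. This observation is logically minor, but it resonates nicely with the paper's broader point that the way a proof is framed (which hypotheses are foregrounded, in what order) communicates something beyond the bare logical content.
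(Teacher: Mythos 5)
Your proof is correct and is essentially identical to the paper's: both fix an arbitrary $s' \in S'$, use surjectivity to write $s' = \phi(s)$, and then chain $s' = \phi(s) = \phi(e \star s) = \phi(e) \star' \phi(s) = \phi(e) \star' s'$. Your added remark that only surjectivity and the homomorphism property are needed is accurate but does not change the argument.
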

        
        \begin{proof}Let $s'\in S'$. Due to $\phi$ being onto, $\exists s\in S$ such that $\phi(s) = s'$. Hence\begin{equation*}s' = \phi(s) = \phi(e\star s) = \phi(e)\star'\phi(s) = \phi(e)\star's'\end{equation*}\end{proof}
\subsubsection{\textsc{Set Theory}}
A transitive set is defined to be a set $a$ such that all members of $a$ are subsets of $a$, and $a^+$ is defined to be $a\cup \{a\}$. We show a proof that if $a$ is a transitive set, then $\bigcup (a^+)=a$.\begin{proof}\begin{align*}(\bigcup a^+) &= \bigcup(a \cup \{a\})\\&= (\bigcup a)\cup(\bigcup \{a\})\\&= (\bigcup a)\cup a\\&= a
\end{align*}\end{proof}
\subsection{Emoji-Substituted Original Proof Variants}
\subsubsection{\textsc{Difference Quotient}}The \textbf{\twemoji{1f600}} of a function $\twemoji{1f601}$ is defined to be \begin{equation*}\frac{\twemoji{1f601}(\twemoji{1f604} + \twemoji{1f603}) - \twemoji{1f601}(\twemoji{1f604})}{(\twemoji{1f604} + \twemoji{1f603}) - \twemoji{1f604}}\end{equation*}where $\twemoji{1f603}$ is nonzero. Let $\twemoji{1f602}$: $\mathbb{R}\rightarrow\mathbb{R}$ be the function defined by $\twemoji{1f602}(\twemoji{1f604}) = \twemoji{1f604}^2$. The following shows the \twemoji{1f600}:\begin{align*}\frac{\twemoji{1f602}(\twemoji{1f604} + \twemoji{1f603}) - \twemoji{1f602}(\twemoji{1f604})}{(\twemoji{1f604} + \twemoji{1f603})-\twemoji{1f604}} &= \frac{\twemoji{1f602}(\twemoji{1f604} + \twemoji{1f603}) - \twemoji{1f602}(\twemoji{1f604})}{\twemoji{1f603}}\\&= \frac{(\twemoji{1f604} + \twemoji{1f603})^2 - \twemoji{1f604}^2}{\twemoji{1f603}}\\&= \frac{\twemoji{1f604}^2 + 2\twemoji{1f604}\twemoji{1f603} + \twemoji{1f603}^2 - \twemoji{1f604}^2}{\twemoji{1f603}}\\&= \frac{2\twemoji{1f604}\twemoji{1f603}}{\twemoji{1f603}^2}\\&= 2\twemoji{1f604} + \twemoji{1f603}
\end{align*}
\subsubsection{\textsc{Distributive}}
The distributive law tells us that for all numbers $\twemoji{1f600}$, $\twemoji{1f601}$, and $\twemoji{1f602}$,\begin{equation*}\twemoji{1f600}(\twemoji{1f601}+\twemoji{1f602}) = \twemoji{1f600}\twemoji{1f601} + \twemoji{1f600}\twemoji{1f602}
\end{equation*}
\subsubsection{\textsc{Exponents Diff Rule}}
Recall the Properties of Exponents:\begin{equation*}\frac{\text{\twemoji{1f600}}^{\text{\twemoji{1f601}}}}{\text{\twemoji{1f600}}^{\text{\twemoji{1f602}}}} = \text{\twemoji{1f600}}^{\text{\twemoji{1f601} - \twemoji{1f602}}}
\end{equation*}
\subsubsection{\textsc{Exponents Power Rule}}
Recall the Properties of Exponents:\begin{equation*}(\text{\twemoji{1f600}}^{\text{\twemoji{1f601}}})^{\text{\twemoji{1f602}}} = \text{\twemoji{1f600}}^{\text{\twemoji{1f601} \twemoji{1f602}}}
\end{equation*}
\subsubsection{\textsc{Exponents Prod Rule}}
Recall the Properties of Exponents:\begin{equation*}\text{\twemoji{1f600}}^{\text{\twemoji{1f601}}} * \text{\twemoji{1f600}}^{\text{\twemoji{1f602}}} = \text{\twemoji{1f600}}^{\text{\twemoji{1f601} + \twemoji{1f602}}}
\end{equation*}
\subsubsection{\textsc{Homomorphism}}
Let $\langle \twemoji{1f600},\twemoji{1f605}\rangle$ and $\langle \twemoji{1f600}',\twemoji{1f605}'\rangle$ be binary algebraic structures. A \textbf{\twemoji{1f602} from $\langle \twemoji{1f600},\twemoji{1f605}\rangle$ to $\langle \twemoji{1f600}',\twemoji{1f605}'\rangle$} is a function $\twemoji{1f602}$ : $\twemoji{1f600}\rightarrow \twemoji{1f600}'$ such that for all $\twemoji{1f603}$, $\twemoji{1f604}\in \twemoji{1f600}$,\begin{equation*}\twemoji{1f602}(\twemoji{1f603}\twemoji{1f605} \twemoji{1f604}) = \twemoji{1f602}(\twemoji{1f603}) \twemoji{1f605}'\twemoji{1f602}(\twemoji{1f604})
\end{equation*}
\subsubsection{\textsc{Induction}}
The following is a portion of a proof by induction that for all natural numbers $\twemoji{1f600}$, $\twemoji{1f600}^3 - \twemoji{1f600}$ is divisible by 6. At this point in the proof, it has been assumed that $\twemoji{1f601}^3 - \twemoji{1f601}$ is divisible by 6, and it is being shown that $(\twemoji{1f601}+1)^3 - (\twemoji{1f601}+1)$ is therefore also divisible by 6.\begin{align*}(\twemoji{1f601}+1)^3 - (\twemoji{1f601} + 1) &= (\twemoji{1f601}^3 +3\twemoji{1f601}^2 + 3\twemoji{1f601} + 1) - (\twemoji{1f601} + 1)\\&= (\twemoji{1f601}^3 - \twemoji{1f601}) + (3\twemoji{1f601}^2 + 3\twemoji{1f601})\\&= (\twemoji{1f601}^3 - \twemoji{1f601}) + 3\twemoji{1f601}(\twemoji{1f601}+1)
\end{align*}
\subsubsection{\textsc{Product Rule}}
The \textit{product rule} for derivatives says that if $\twemoji{1f600}$ and $\twemoji{1f601}$ are differentiable functions, then\begin{align*}(\twemoji{1f600}\twemoji{1f601})' = \twemoji{1f600}\twemoji{1f601}' + \twemoji{1f600}'\twemoji{1f601} 
\end{align*}
\subsubsection{\textsc{Proof}}
\begin{theorem}Suppose $\langle \twemoji{1f600},\twemoji{1f605}\rangle$ and $\langle \twemoji{1f600}',\twemoji{1f605}'\rangle$ be binary algebraic structures, and $\twemoji{1f602}$ is an isomorphism from $\langle \twemoji{1f600},\twemoji{1f605}\rangle$ onto $\langle \twemoji{1f600}',\twemoji{1f605}'\rangle$. Further suppose that $\twemoji{1f607}$ is a left identity element in $\langle \twemoji{1f600},\twemoji{1f605}\rangle$. Then $\twemoji{1f602}(\twemoji{1f607})$ is a left identity element in $\langle \twemoji{1f600}',\twemoji{1f605}'\rangle$.\end{theorem}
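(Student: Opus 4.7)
The plan is to establish that $\twemoji{1f602}(\twemoji{1f607})$ acts as a left identity on an arbitrary element of $\twemoji{1f600}'$. That is, I would fix some $s' \in \twemoji{1f600}'$ and derive $\twemoji{1f602}(\twemoji{1f607}) \twemoji{1f605}' s' = s'$; since $s'$ is arbitrary, this is exactly the left-identity property in $\langle \twemoji{1f600}',\twemoji{1f605}'\rangle$.

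The central move is to use surjectivity of $\twemoji{1f602}$ (given by the ``isomorphism onto'' hypothesis) to pick $s \in \twemoji{1f600}$ with $\twemoji{1f602}(s) = s'$. This step is essential because the left-identity hypothesis on $\twemoji{1f607}$ lives in $\twemoji{1f600}$, so I must pull $s'$ back into the domain before I can invoke it. After that, the argument is a short chain of substitutions: rewrite $s' = \twemoji{1f602}(s)$, expand the input using the left-identity property as $\twemoji{1f602}(s) = \twemoji{1f602}(\twemoji{1f607} \twemoji{1f605} s)$, split via the homomorphism property to get $\twemoji{1f602}(\twemoji{1f607}) \twemoji{1f605}' \twemoji{1f602}(s)$, and finally collapse $\twemoji{1f602}(s)$ back to $s'$ to arrive at $\twemoji{1f602}(\twemoji{1f607}) \twemoji{1f605}' s'$.

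There is no serious obstacle here — this is a standard first-course exercise in elementary algebra. The only subtlety worth flagging in the write-up is that injectivity of $\twemoji{1f602}$ is never used; only surjectivity and the homomorphism property are invoked, so the conclusion in fact holds under the weaker hypothesis of a surjective homomorphism. It is also worth noting, in the spirit of Case Study Two, that the natural presentation $s' = \twemoji{1f602}(s) = \twemoji{1f602}(\twemoji{1f607} \twemoji{1f605} s) = \twemoji{1f602}(\twemoji{1f607}) \twemoji{1f605}' \twemoji{1f602}(s) = \twemoji{1f602}(\twemoji{1f607}) \twemoji{1f605}' s'$ is communicatively oriented: it begins at what we have been handed ($s'$) and terminates at the form we are trying to produce ($\twemoji{1f602}(\twemoji{1f607}) \twemoji{1f605}' s'$), so that each intermediate equality applies exactly one definition or hypothesis. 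Any permutation of these four equalities would be logically equivalent but would obscure this left-to-right narrative.
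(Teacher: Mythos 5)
Your proposal is correct and is essentially identical to the paper's own proof: both fix an arbitrary element of the codomain, pull it back via surjectivity, and run the same four-step chain of equalities using the left-identity hypothesis and the homomorphism property. Your side observations — that injectivity is never needed and that the ordering of the chain is communicatively motivated — are accurate and consistent with the paper's discussion, but the core argument matches the paper's exactly.
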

        
        \begin{proof}Let $\twemoji{1f606}'$ be an element of $\twemoji{1f600}'$. Since $\twemoji{1f602}$ is onto, there exists some $\twemoji{1f606}\in \twemoji{1f600}$ such that $\twemoji{1f602}(\twemoji{1f606}) = \twemoji{1f606}'$. Hence\begin{equation*}\twemoji{1f606}' = \twemoji{1f602}(\twemoji{1f606}) = \twemoji{1f602}(\twemoji{1f607}\twemoji{1f605} \twemoji{1f606}) = \twemoji{1f602}(\twemoji{1f607})\twemoji{1f605}'\twemoji{1f602}(\twemoji{1f606}) = \twemoji{1f602}(\twemoji{1f607})\twemoji{1f605}'\twemoji{1f606}'
        \end{equation*}
        \end{proof}
\subsubsection{\textsc{Set Theory}}
The following is a proof in a set theory textbook that if $\twemoji{1f600}$ is a transitive set, then $\bigcup (\twemoji{1f600}^+)=\twemoji{1f600}$. Note that a transitive set is defined to be a set $\twemoji{1f600}$ such that all members of $\twemoji{1f600}$ are subsets of $\twemoji{1f600}$, and $\twemoji{1f600}^+$ is defined to be $\twemoji{1f600}\cup \{\twemoji{1f600}\}$\begin{proof}\begin{align*}(\bigcup \twemoji{1f600}^+) &= \bigcup(\twemoji{1f600} \cup \{\twemoji{1f600}\})\\&= (\bigcup \twemoji{1f600})\cup(\bigcup \{\twemoji{1f600}\})\\&= (\bigcup \twemoji{1f600})\cup \twemoji{1f600}\\&= \twemoji{1f600}
\end{align*}\end{proof}
\subsection{Emoji-Substituted Reworded Proof Variants}
\subsubsection{\textsc{Difference Quotient}}
Let $\twemoji{1f602}$: $\mathbb{R}\rightarrow\mathbb{R}$ be the function $\twemoji{1f602}(\twemoji{1f604}) = \twemoji{1f604}^2$. The following shows the difference quotient:\begin{align*}\frac{\twemoji{1f602}(\twemoji{1f604} + \twemoji{1f603}) - \twemoji{1f602}(\twemoji{1f604})}{(\twemoji{1f604} + \twemoji{1f603})-\twemoji{1f604}} &= \frac{\twemoji{1f602}(\twemoji{1f604} + \twemoji{1f603}) - \twemoji{1f602}(\twemoji{1f604})}{\twemoji{1f603}}\\&= \frac{(\twemoji{1f604} + \twemoji{1f603})^2 - \twemoji{1f604}^2}{\twemoji{1f603}}\\&= \frac{\twemoji{1f604}^2 + 2\twemoji{1f604}\twemoji{1f603} + \twemoji{1f603}^2 - \twemoji{1f604}^2}{\twemoji{1f603}}\\&= \frac{2\twemoji{1f604}\twemoji{1f603}}{\twemoji{1f603}^2}\\&= 2\twemoji{1f604} + \twemoji{1f603}
\end{align*}
\subsubsection{\textsc{Distributive}}
For all numbers $\twemoji{1f600}$, $\twemoji{1f601}$, and $\twemoji{1f602}$, the distributive law states that \begin{equation*}\twemoji{1f600}(\twemoji{1f601}+\twemoji{1f602}) = \twemoji{1f600}\twemoji{1f601} + \twemoji{1f600}\twemoji{1f602}
\end{equation*}
\subsubsection{\textsc{Exponents Diff Rule}}
Here are some exponent properties:\begin{equation*}\frac{\text{\twemoji{1f600}}^{\text{\twemoji{1f601}}}}{\text{\twemoji{1f600}}^{\text{\twemoji{1f602}}}} = \text{\twemoji{1f600}}^{\text{\twemoji{1f601} - \twemoji{1f602}}}
\end{equation*}
\subsubsection{\textsc{Exponents Power Rule}}
Here are some exponent properties:\begin{equation*}(\text{\twemoji{1f600}}^{\text{\twemoji{1f601}}})^{\text{\twemoji{1f602}}} = \text{\twemoji{1f600}}^{\text{\twemoji{1f601} \twemoji{1f602}}}
\end{equation*}
\subsubsection{\textsc{Exponents Prod Rule}}
Here are some exponent properties:\begin{equation*}\text{\twemoji{1f600}}^{\text{\twemoji{1f601}}} * \text{\twemoji{1f600}}^{\text{\twemoji{1f602}}} = \text{\twemoji{1f600}}^{\text{\twemoji{1f601} + \twemoji{1f602}}}
\end{equation*}
\subsubsection{\textsc{Homomorphism}}
If $\langle \twemoji{1f600},\twemoji{1f605}\rangle$ and $\langle \twemoji{1f600}',\twemoji{1f605}'\rangle$ are binary algebraic structures, a \textbf{\twemoji{1f602} from $\langle \twemoji{1f600},\twemoji{1f605}\rangle$ to $\langle \twemoji{1f600}',\twemoji{1f605}'\rangle$} is a function $\twemoji{1f602}$ : $\twemoji{1f600}\rightarrow \twemoji{1f600}'$ such that $\forall$ $\twemoji{1f603}$, $\twemoji{1f604}\in \twemoji{1f600}$,\begin{equation*}\twemoji{1f602}(\twemoji{1f603}\twemoji{1f605} \twemoji{1f604}) = \twemoji{1f602}(\twemoji{1f603}) \twemoji{1f605}'\twemoji{1f602}(\twemoji{1f604})
\end{equation*}
\subsubsection{\textsc{Induction}}
Inductively prove that $\forall \twemoji{1f600}\in\mathbb{N}$, $\twemoji{1f600}^3 - \twemoji{1f600}$ is divisible by 6. We will show that $(\twemoji{1f601}+1)^3 - (\twemoji{1f601}+1)$ is divisible by 6, with the prior assumption that $\twemoji{1f601}^3 - \twemoji{1f601}$ is divisible by 6.\begin{align*}(\twemoji{1f601}+1)^3 - (\twemoji{1f601} + 1) &= (\twemoji{1f601}^3 +3\twemoji{1f601}^2 + 3\twemoji{1f601} + 1) - (\twemoji{1f601} + 1)\\&= (\twemoji{1f601}^3 - \twemoji{1f601}) + (3\twemoji{1f601}^2 + 3\twemoji{1f601})\\&= (\twemoji{1f601}^3 - \twemoji{1f601}) + 3\twemoji{1f601}(\twemoji{1f601}+1)
\end{align*}
\subsubsection{\textsc{Product Rule}}
If $\twemoji{1f600}$ and $\twemoji{1f601}$ are differentiable functions, then the \textit{product rule} states that \begin{align*}(\twemoji{1f600}\twemoji{1f601})' = \twemoji{1f600}\twemoji{1f601}' + \twemoji{1f600}'\twemoji{1f601} 
\end{align*}
\subsubsection{\textsc{Proof}}
\begin{theorem} $\twemoji{1f602}$ is an isomorphism from $\langle \twemoji{1f600},\twemoji{1f605}\rangle$ onto $\langle \twemoji{1f600}',\twemoji{1f605}'\rangle$ where $\langle \twemoji{1f600},\twemoji{1f605}\rangle$ and $\langle \twemoji{1f600}',\twemoji{1f605}'\rangle$ are both binary algebraic structures. If $\twemoji{1f607}$ is a left identity element in $\langle \twemoji{1f600},\twemoji{1f605}\rangle$, then $\twemoji{1f602}(\twemoji{1f607})$ is a left identity element in $\langle \twemoji{1f600}',\twemoji{1f605}'\rangle$.\end{theorem}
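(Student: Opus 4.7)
The plan is to fix an arbitrary element \(\twemoji{1f606}'\) of \(\twemoji{1f600}'\) and exhibit \(\twemoji{1f602}(\twemoji{1f607})\) as a left identity for it under \(\twemoji{1f605}'\). The argument leans on exactly two ingredients of the isomorphism hypothesis---surjectivity of \(\twemoji{1f602}\) and the homomorphism property---plus the defining property of the left identity \(\twemoji{1f607}\) in \(\langle\twemoji{1f600},\twemoji{1f605}\rangle\). Notably, injectivity of \(\twemoji{1f602}\) is never invoked; the conclusion would remain true for any surjective homomorphism.

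First I would use surjectivity to choose \(\twemoji{1f606} \in \twemoji{1f600}\) with \(\twemoji{1f602}(\twemoji{1f606}) = \twemoji{1f606}'\). Then I would rewrite \(\twemoji{1f606}\) as \(\twemoji{1f607}\twemoji{1f605}\twemoji{1f606}\) using the left-identity property, apply \(\twemoji{1f602}\), and split the result via the homomorphism axiom into \(\twemoji{1f602}(\twemoji{1f607})\twemoji{1f605}'\twemoji{1f602}(\twemoji{1f606})\). Substituting \(\twemoji{1f606}'\) back in for \(\twemoji{1f602}(\twemoji{1f606})\) on the right yields \(\twemoji{1f602}(\twemoji{1f607})\twemoji{1f605}'\twemoji{1f606}'\), closing the chain of equalities that began at \(\twemoji{1f606}'\). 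Since \(\twemoji{1f606}'\) was arbitrary, this finishes the proof.

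There is no real mathematical obstacle---the manipulation is nearly automatic once one commits to starting on the codomain side and working back through the preimage before riding the homomorphism forward again. The genuinely interesting question, in light of Case Study Two, concerns the directionality of the resulting chain. The natural ordering threads from \(\twemoji{1f606}'\) through the intermediate forms \(\twemoji{1f602}(\twemoji{1f606})\) and \(\twemoji{1f602}(\twemoji{1f607}\twemoji{1f605}\twemoji{1f606})\) before landing on \(\twemoji{1f602}(\twemoji{1f607})\twemoji{1f605}'\twemoji{1f606}'\), mirroring the introduce-then-manipulate rhetoric of standard proof exposition. Any permutation of these five expressions is logically equivalent, yet per the surprisal results reported above we would expect alternative orderings---for instance, one that unfolds the homomorphism step before the identity substitution---to feel markedly less natural to a human reader and, correspondingly, to be assigned higher per-token surprisal by the models evaluated.
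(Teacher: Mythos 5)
Your proof is exactly the paper's: fix an arbitrary element of the codomain, pull it back via surjectivity, insert the left identity, push the homomorphism through, and substitute back, producing the same five-term chain of equalities the paper writes down. The observation that injectivity is never needed (so the result holds for any surjective homomorphism) is correct and a nice bonus, but the argument itself is identical.
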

        
        \begin{proof}Let $\twemoji{1f606}'\in \twemoji{1f600}'$. Due to $\twemoji{1f602}$ being onto, $\exists \twemoji{1f606}\in \twemoji{1f600}$ such that $\twemoji{1f602}(\twemoji{1f606}) = \twemoji{1f606}'$. Hence\begin{equation*}\twemoji{1f606}' = \twemoji{1f602}(\twemoji{1f606}) = \twemoji{1f602}(\twemoji{1f607}\twemoji{1f605} \twemoji{1f606}) = \twemoji{1f602}(\twemoji{1f607})\twemoji{1f605}'\twemoji{1f602}(\twemoji{1f606}) = \twemoji{1f602}(\twemoji{1f607})\twemoji{1f605}'\twemoji{1f606}'
\end{equation*}
\end{proof}
\subsubsection{\textsc{Set Theory}}
A transitive set is defined to be a set $\twemoji{1f600}$ such that all members of $a$ are subsets of $\twemoji{1f600}$, and $\twemoji{1f600}^+$ is defined to be $\twemoji{1f600}\cup \{\twemoji{1f600}\}$. We show a proof that if $\twemoji{1f600}$ is a transitive set, then $\bigcup (\twemoji{1f600}^+)=\twemoji{1f600}$.\begin{proof}\begin{align*}(\bigcup \twemoji{1f600}^+) &= \bigcup(\twemoji{1f600} \cup \{\twemoji{1f600}\})\\&= (\bigcup \twemoji{1f600})\cup(\bigcup \{\twemoji{1f600}\})\\&= (\bigcup \twemoji{1f600})\cup \twemoji{1f600}\\&= \twemoji{1f600}
\end{align*}\end{proof}

\end{document}